\def\BibTeX{{\rm B\kern-.05em{\sc i\kern-.025em b}\kern-.08em
    T\kern-.1667em\lower.7ex\hbox{E}\kern-.125emX}}
\newtheorem{lemma}{Lemma}
\newcommand*{\rom}[1]{\expandafter\@slowromancap\romannumeral #1@}
\begin{document}

\title{Unified Spatio-Temporal Modeling for Traffic Forecasting  using Graph Neural Network\\
}

\author{\IEEEauthorblockN{ Amit Roy$^\star$, Kashob Kumar Roy$^{\star}$\thanks{$^\star$Equal Contribution}, Amin Ahsan Ali, M Ashraful Amin and A K M Mahbubur Rahman}
\IEEEauthorblockA{\textit{Artificial Intelligence and Cybernetics Lab},
\textit{Independent University Bangladesh}\\
\{amitroy7781, kashobroy\}@gmail.com, and \{aminali, aminmdashraful, akmmrahman\}@iub.edu.bd}}


\maketitle

\begin{abstract}
Research in deep learning models to forecast traffic intensities has gained great attention in recent years due to their capability to capture the complex spatio-temporal relationships within the traffic data. However, most state-of-the-art approaches have designed spatial-only (e.g. Graph Neural Networks) and temporal-only (e.g. Recurrent Neural Networks) modules to separately extract spatial and temporal features. However, we argue that it is less effective to extract the complex spatio-temporal relationship with such factorized modules. Besides, most existing works predict the traffic intensity of a particular time interval only based on the traffic data of the previous one hour of that day. And thereby ignores the repetitive daily/weekly pattern that may exist in the last hour of data. Therefore, we propose a Unified Spatio-Temporal Graph Convolution Network (USTGCN) for traffic forecasting that performs both spatial and temporal aggregation through direct information propagation across different timestamp nodes with the help of spectral graph convolution on a spatio-temporal graph. Furthermore, it captures historical daily patterns in previous days and current-day patterns in current-day traffic data. Finally, we validate our work's effectiveness through experimental analysis\footnote{Code is available at \href{https://github.com/AmitRoy7781/USTGCN}{\color{magenta} github.com/AmitRoy7781/USTGCN}} , which shows that our model USTGCN can outperform state-of-the-art performances in three popular benchmark datasets from the Performance Measurement System (PeMS). Moreover, the training time is reduced significantly with our proposed USTGCN model.
\end{abstract}

\begin{IEEEkeywords}
Graph Neural Network, Spatio Temporal Data Analysis, Prediction, and Forecasting, Time Series Analysis 
\end{IEEEkeywords}

\section{Introduction }


In recent years, Intelligent Transportation System (ITS) is being developed in many countries around the world and traffic forecasting lies in the heart of ITS. Traffic intensity is determined by the average speed of vehicles passing through observed road junctions in a traffic network at each time interval and the goal of traffic forecasting is to predict the traffic intensity in near future by observing the traffic data from the past and current time along with the physical traffic network. Accurate forecasting of traffic intensity throughout different parts of the day in a busy city can help the inhabitants to schedule their journeys in an efficient way to avoid traffic jam. 
  Besides, accurate traffic flow prediction is required to recommend time saving paths for drivers and thus crucial for dynamic traffic management. Hence, the problem of traffic forecasting has drawn much interest in artificial intelligence and machine learning community.  

However, the task of traffic forecasting is challenging because there lies a complex spatio-temporal relationship in traffic data as the traffic within a busy city changes heavily in different locations throughout different periods in a day. In order to capture this complex spatio-temporal relationship, several deep learning based approaches are proposed in recent years~\cite{li2017diffusion,guo2019attention,wu2019graph,park2019stgrat,chen2019gated,fang2019gstnet,zhao2019t,ijcai2020-326,xu2020spatial}. As the traffic network can be represented using a graph, graph neural networks~\cite{kipf2016semi,hamilton2017inductive} are used to capture spatial relationships while recurrent neural networks are used to encode the temporal relationships~\cite{yu2017spatio}. To name a few, STGCN~\cite{yu2017spatio} is the first approach that applied graph convolutions along with recurrent units to solve the time series prediction problem in the traffic domain. DCRNN~\cite{li2017diffusion}, on the other hand, used a bidirectional random walk to capture the spatial relations and GRU for temporal dependencies. Again, attention guided spatial and temporal convolutions are performed among recent, daily and weekly components in ASTGCN~\cite{guo2019attention}. Graph Wavenet~\cite{wu2019graph} proposed to learn a self-adaptive adjacency matrix to encode complex spatial relationships in the embedded space while dilated causal convolution layers at different granular levels are used to capture temporal dependency. A very recent work, LSGCN~\cite{ijcai2020-326} captured long-term and short-term dependencies with graph convolution and gated linear unit (GLU). 

\begin{figure}[t]
\centering
\includegraphics[width=1.0\columnwidth]{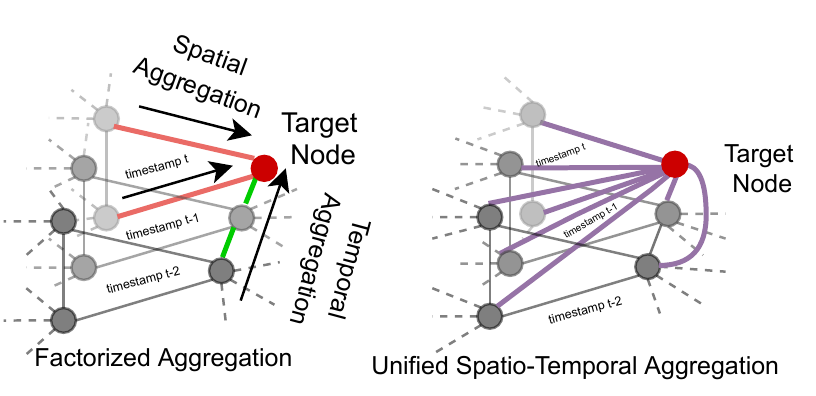}
\caption{ \small Factorized Spatial-only and Temporal-only Aggregation (Left) vs. Unified Spatio-Temporal Aggregation (Right). For a target node in a physical traffic network state-of-the-art approaches capture spatial information from neighbor nodes in each timestamps and aggregate the information for the corresponding node at different timestamps. In contrary, capturing the traffic information for a target node from both spatial and temporal component in a unified manner can learn the inter-relationsip from neighbor nodes at different timestamps more comprehensively.}
\label{fig:motivation_figure}
\end{figure}


Most of the recent works of traffic forecasting follow a typical way of first extracting spatial relationships among different road-junctions/nodes through employing graph neural network on each timestamp traffic network (see in Figure~\ref{fig:motivation_figure}). After that, temporal dependencies are exploited using a 1-D convolution or a recurrent neural network across different timestamp graphs of the physical traffic network. As the physical traffic network (spatial) and continuous traffic data (temporal) components are both related in determining the traffic feature at future timestamps, dealing with them in a factorized manner does not fully serve the purpose to capture the interrelationship.  In Figure~\ref{fig:motivation_figure}, we demonstrate that extracting spatial and temporal information separately and then combining them cannot capture the inherent interrelationship between space-time in traffic data comprehensively. Hence, it is desirable to model both the spatial and temporal dependencies of traffic data in a unified manner. This motivates us to design a Unified Spatio-Temporal Graph Convolution Network (USTGCN) that performs the spatial and temporal aggregation in a unified way as depicted in Figure~\ref{fig:motivation_figure}. Thus, a sophisticated formulation is proposed concatenating the spatial adjacency matrix of all timestamps for a particular time window into a single spatio-temporal adjacency matrix that can propagate the traffic features from different road junctions across different timestamps. USTGCN, later on performs graph convolution with the spatio-temporal graph, which serves the purpose of unified spatio-temporal aggregation. 

\begin{figure}[b]
\centering
\scalebox{0.95}{
\includegraphics[width=1.0\columnwidth]{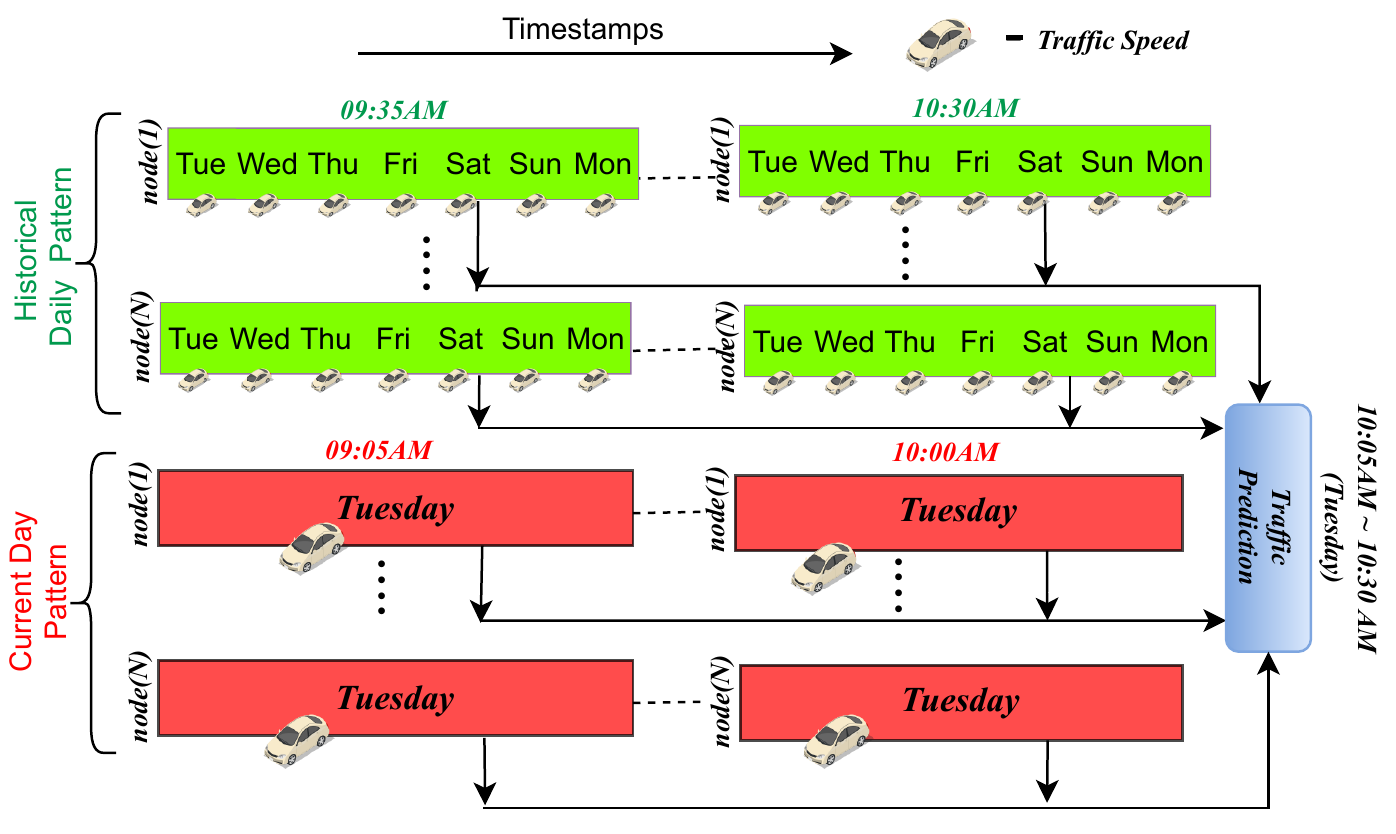}
}
\caption{ \small Daily Patterns in the historical data window from the past days traffic data with Current-day Pattern from the past hour traffic of current day data carry important information to predict traffic intensities. Historical data window consists of last 12 timestamp including prediction window from past seven days' data.}
\label{fig:time_window}
\end{figure}

Another shortcoming of the current state-of-the-art methods is that these models attempted to capture the spatial and temporal dependencies of the current-day pattern only where the current-day pattern is defined as traffic pattern observed in the last hour from the prediction window on the current day. For example, recent approaches aim to learn the pattern in traffic data from 9:05 AM - 10:00 AM to predict the traffic speed at 10:05 AM - 10:30 AM on a particular day. But we argue that only the current-day pattern limits the capability of a traffic forecasting model to predict the traffic intensities in a dynamic metropolitan city whereas the analysis of historical daily patterns can boost the prediction performance as well as the generalization power of the model. As the traffic in a city follows a similar pattern at the same period for different days, it is intuitive that the traffic intensity on Tuesday at 10:05 AM - 10:30 AM will have similarity to the traffic intensity at 10:05 AM - 10:30 AM during each day of the past week which we refer as historical daily pattern throughout this paper. Hence, the historical daily pattern can be represented as the repetitive pattern in traffic data of the last week for a particular time window. For instance, the traffic speed pattern during 10:05 AM - 10:30 AM of each day in the last one week is related in general for predicting the traffic at 10:05 AM - 10:30 AM. To be consistent with the number of timestamps considered in the current-day pattern, in this paper we consider the traffic pattern from 09:30 AM - 10:30 AM of last one week as the historical daily pattern to predict the traffic at 10:05 AM - 10:30 AM. Thus, we analyze the historical daily pattern of traffic speeds during 09:30 AM - 10:30 AM for the last one week as well as the current-day pattern during 9:05 AM - 10:00 AM to predict speeds at 10:05 AM - 10:30 AM on the current day as Figure~\ref{fig:time_window} portrays. Therefore, we propose a novel unified framework to learn the pattern for the last hour on the current day as well as historical daily patterns from traffic data of the previous days.


In summary, our key contributions are as follows:
\begin{itemize}
    \item A novel unified spatio-temporal graph convolution network (USTGCN) to capture the complex cross-spacetime dependencies in traffic network data.
    \item A simple but effective approach to extract current-day patterns and historical daily patterns through analyzing traffic data on the current day and last one week respectively.
    \item With the experimental analysis, we have shown that our unified spatio-temporal model can achieve state-of-the-art performances in three publicly available datasets from the Performance Measurement System (PeMS).
\end{itemize}

\section{Background Study} In the past, various statistical and machine learning techniques such as Auto-Regressive Integrated Moving Average (ARIMA)~\cite{williams2003modeling}, Historical Average (HA), Support Vector Regression (SVR) ~\cite{wu2004travel}, and Kalman filters~\cite{okutani1984dynamic} have been widely used for traffic forecasting. However, in recent years, graph neural networks(GNN) have achieved greater success in modeling real-life traffic. GNNs are able to encode the spatial dependency between neighbor nodes in a graph into their hidden representation by employing different feature aggregation scheme. Graph Convolution Networks (GCN)~\cite{kipf2016semi,defferrard2016convolutional} apply spectral convolutions to learn structural dependency as well as feature information while  GraphSAGE~\cite{hamilton2017inductive} introduced a neighborhood aggregation strategy to preserve the inter-relationship among proximal nodes. 

As GNNs succeeds in learning representations for various downstream machine learning tasks, several recent works have employed graph convolution to learn node representations that can extract spatial relations from the traffic network. STGCN~\cite{yu2017spatio} has modeled spatial and temporal relations using a convolutional network where the spatial graph convolution is applied to extract spatial features in between two temporal gated-convolution with residual connection and bottleneck strategy. The iterative strategy is used for traffic prediction in STGCN where prediction in previous iterations are used for next iterations which accumulates error in prediction. 
The diffusion process is used to model the traffic networks in DCRNN~\cite{li2017diffusion} that captures the spatial relations by using the bidirectional random walks and GRU for temporal dependencies but the random walk based graph convolutional strategy cannot completely capture the spatial relations.
Besides, several recent works\cite{wu2019graph,fang2019gstnet,park2019stgrat} have achieved good performance. To capture the spatio-temporal dependency among nodes in the embedded space, Graph Wavenet~\cite{wu2019graph} learns a self-adaptive dependency matrix where the receptive field increases with the number of layers.  Very recent work LSGCN~\cite{ijcai2020-326} proposes a new graph attention network
called cosAtt and incorporates the cosAtt and GCN into the spatial gated block and linear gated block to iteratively predict future traffic intensity. Handling spatial dependency among different road junctions in the physical traffic network and the temporal variation of traffic data separately limits the the state-of-the-art approaches from encoding the complex  spatio-temporal interrelationship into the learned representation.

\begin{figure*}[!t]
\centering
\scalebox{0.85}{
\includegraphics[width=2.0\columnwidth]{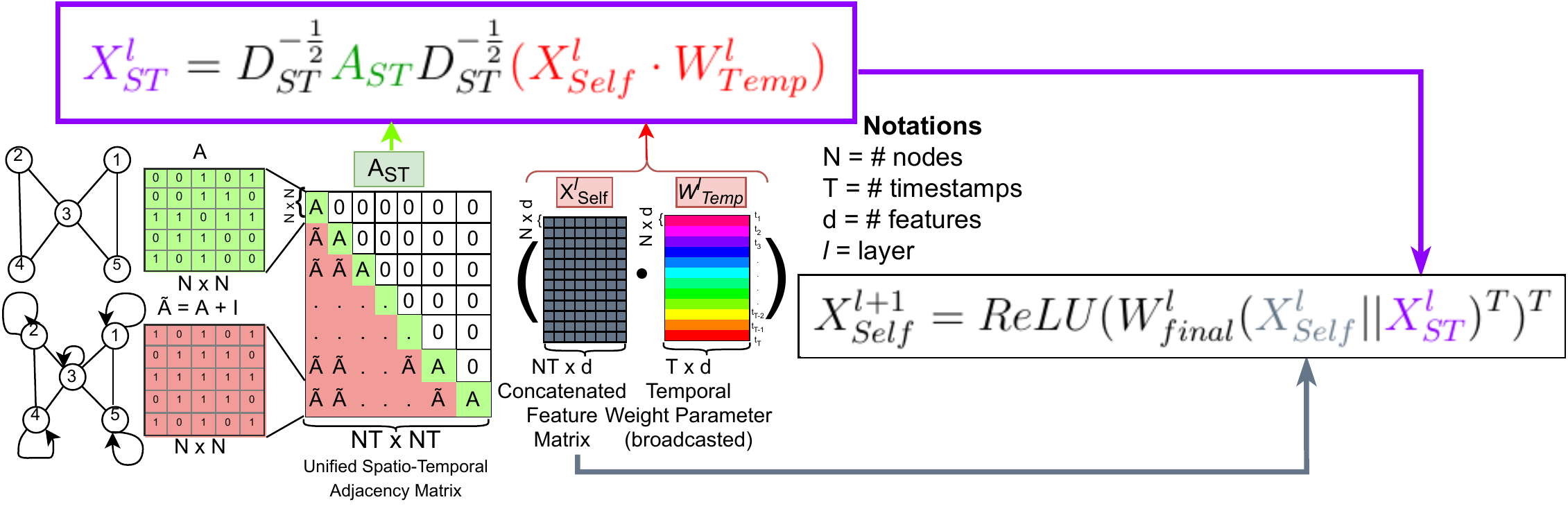}}
\caption{\small Unified Spatio-Temporal Graph Convolutional Network, USTGCN. The unified spatio-temporal adjacency matrix, ${A_{ST}}$ showcases the cross space-time connections among nodes from different timestamps which consists of three types of submatrix: $\textbf{A}$ as diagonal submatrix, $\textbf{\tilde{A}}$ as lower submatrix and $\textbf{0}$ as upper submatrix. ${A_{ST}}$, a lower triangular matrix, facilitates traffic feature propagation from neighboring nodes only from the previous timestamps. The input features of different timestamps at convolution layer ${l}$ are stacked into ${X^{l}_{self}}$ which is element-wise multiplied with broadcasted temporal weight parameter ${W^{l}_{Temp}}$ indicating the importance of the feature at the different timestamp. Afterwards, graph convolution is performed followed by weighted combination of self representation, ${X^{l}_{self}}$ and spatio-temporal aggregated vector, ${X^{l}_{ST}}$  to compute the representation ${X^{l+1}_{self}}$ that is used as input features at next layer, ${l+1}$ or fed into the regression task.
}
\label{fig:unified_sp_temp_aggregation}
\end{figure*}

\begin{figure*}[!htbp]
\centering
\scalebox{0.85}{
\includegraphics[width=2.0\columnwidth]{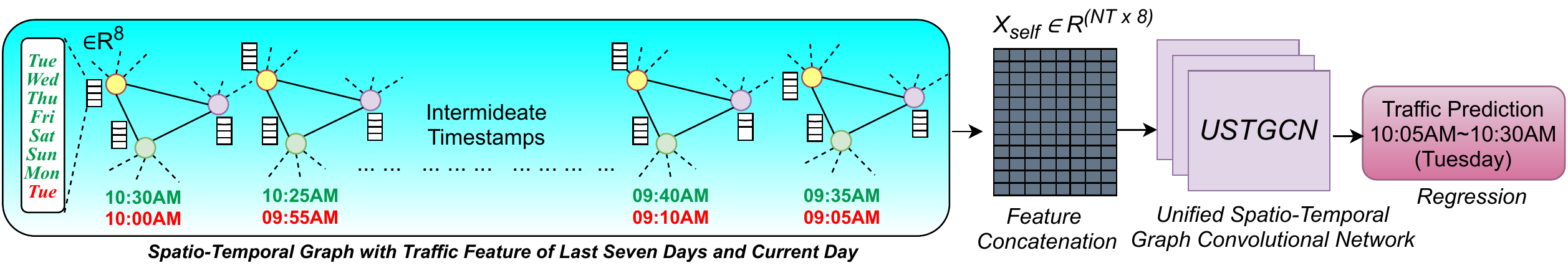}
}
\caption{ \small To learn both daily and current-day traffic pattern, for each node we stack the traffic speeds of the last seven days  (traffic pattern during 09:30 AM - 10:30 AM for the last  week depicted with green color)
along with the current-day traffic pattern for the past hour (traffic speed during 9:05 AM - 10:00 AM on current day i.e. Tuesrday depicted with red color) into the corresponding feature vector. We feed the feature matrix stacked for $N$ nodes in the traffic network across $T = 12$ timestamps to the USTGCN model of \textit{$K$} convolution layers to compute spatio-temporal embedding. Finally, the regression module predicts future traffic intensities by utilizing the spatio-temporal embeddings.}
\label{fig:model_overview}
\end{figure*}

\section{Preliminaries and Problem Setting}
\label{sec:problem_definition}
Forecasting traffic is a regression task where the input is a traffic network that can be represented with the form of a graph $G$ = ( $V$, $E$, $A$) where $V$ and $E$ denotes the set of nodes (road junctions)  and the set of edges (roads) respectively. Let, $N = |V|$. Then, $A \in \mathbb{R}^{N  \times N}$ is an adjacency matrix with $A_{ij} > 0$ if there exists an edge between node $i$ and node $j$, i.e. $(i,j)\in E$ and $0$ otherwise. Also, $A_{ij}$ can denote the edge weight based on the road distance in a weighted traffic network. As the traffic at different nodes changes over time, we can represent the traffic feature of a node $u$ at timestamp $t$ using $X_{u}^{<t>} \in \mathbb{R}^d$ where $d$ is the dimension of traffic feature and the traffic features of all the nodes at timestamp $t$ can be represented using $X^{<t>} \in \mathbb{R}^{N \times d}$. Given a sequence of traffic feature of most recent $T$ timestamps as input  $(X^{<1>},X^{<2>},\ldots \ldots \ldots, X^{<T>})$, 
the task of traffic forecasting is to predict the traffic feature of next $n$ timestamp in the future $(X^{<T+1>},X^{<T+2>}, \ldots \ldots \ldots, X^{<T+n>})$. Here, we refer to recent past $T$ timestamps as data window and next $n$ timestamps as prediction window in the following sections. The traffic feature could be average speed, traffic flow, or road occupancy but following the state-of-art we choose traffic speed as the traffic feature without the loss of generality.


\section{Proposed Method}
In this section, we first present the whole architecture of USTGCN that can encode complex spatio-temporal dependencies between nodes of a spatio-temporal traffic network into their embeddings.  Next, we describe the the process of feeding in historical and current-day into USTGCN and lastly the regression module to predict the the traffic intensity in future timestamp using the representations obtained from USTGCN. The brief overview of our proposed model is shown in Figure~\ref{fig:unified_sp_temp_aggregation} and \ref{fig:model_overview}.

\subsubsection{USTGCN: Unified Spatio-temporal Graph Convolution Network} Traditional traffic forecasting frameworks, using spatial-only adjacency matrix $A \in \mathbb{R}^{N \times N}$ for each timestamp $t$ enables the model to capture only the spatial relationships among different junctions of the traffic network at a single timestamp. Therefore, to encode the complex spatio-temporal relationship from different timestamps, we introduce cross-spacetime edges with the form of a spatio-temporal graph consisting of spatial-only $T$ timestamp graphs as follows, 

\begin{equation}
	\centering
	A_{ST} = 
\begin{bmatrix}
	 A        &   0    & 0    & \dots     &   0       \\
	 \tilde{A}       &   A    & 0   & \dots     &  
	 0       \\
			 \tilde{A}       & 		\tilde{A}  &  A & \dots     &   0       \\
			 \vdots  &  			 \vdots   &  		 \vdots  & 	\ddots &   \vdots  \\   
			 \tilde{A}       &   		 \tilde{A} & 	 \tilde{A}      & 	 \dots     &  A       \\
\end{bmatrix}
\in \mathbb{R}^{NT \times NT}
\label{eq:sp_temp_adj}
\end{equation}


 In Equation~\ref{eq:sp_temp_adj}, each entry of  $A_{ST}$ is a $N \times N$ matrix as shown in Figure~\ref{fig:unified_sp_temp_aggregation}. It is to be noted that our proposed spatio-temporal adjacency matrix, $A_{ST}$ is a lower triangular matrix because intuitively we can say that traffic intensities of nodes at a particular timestamp $t$ depends on traffic features of nodes from the previous $1$ to $(t-1)$ timestamps. Therefore, for forecasting traffic  at timestamp $t$, an ideal model should aggregate traffic features from the previous timestamps and not from the future timestamps. So, each upper submatrix $[A_{ST}]_{(t, t^{'})}$ = $\mathbf{0}\in \mathbb{R}^{N \times N}$ where $t^{'} > t$, and $t,t^{'} \in \{1,\dots,T\}$. On the other hand, each lower submatrix $[A_{ST}]_{(t, t^{'})}$ = $\tilde{A}$ where $t^{'} < t$ and  $\tilde{A} = A + I$, means that every node at timestamp $t$ aggregates the traffic features of ego (target node)  and neighbor node  from previous $1$ to $(t-1)$ timestamps. And each diagonal submatrix $[A_{ST}]_{(t,t)}$ = A means every node aggregates features from its $1$-hop spatial neighbors at timestamp $t$. We keep the ego (target node) and neighbor node embeddings separate to perform a weighted aggregation explicitly at timestamp $t$. As a result, our model acquires the potential to give  different importance on ego (target node) and on neighbor node features that amplifies its expressiveness. In contrast, if we use $\tilde{A} = A + I$ instead of A, it mixes both embeddings through averaging them which results in less expressive representations of nodes.  
 
 For example, let us consider node $5$ at timestamp 3 in the sample graph in Figure \ref{fig:unified_sp_temp_aggregation} as the target node. $A_{ST}[15:]$ indicates the spatio-temporal connectivity of the target node. Each non-zero entry in $A_{ST}[15:]$ represents the set of edges associated with the target node. $A_{ST}[15:1]$ denotes that node 5 at timestamp 3 is connected with node 1 at timestamp 1. Similarly, $A_{ST}[15:3]$ denotes edge between node 5 at timestamp 3 and node 3 at timestamp 1, $A_{ST}[15:5]$ represents the edge between node 5 at timestamp 3 and node 5 at timestamp 1, and so on. In other words, we can say that the target node (node 5 at timestamp 3) aggregates traffic features from its spatio-temporal neighbor nodes i.e., node $1$, $3$ and $5$ from timestamp $1$, and $2$, and its spatial neighbor nodes i.e., node $1$, and $3$ from timestamp 3. 
 Consequently, the spatio-temporal adjacency matrix facilitates our model to aggregate spatio-temporal traffic features from informative nodes across previous timestamps as well as spatial traffic features from the neighbor nodes of current timestamp. 

From Section \rom{3}, we can recall that $X^{<t>} \in \mathbb{R}^{N \times d}$ denotes the traffic intensity of all the road junctions of the input traffic network at timestamp $t$.
As depicted in the feature concatenation part of Figure~\ref{fig:model_overview}, we transform the given input features of previous $T=12$  timestamps $(X^{<1>}, X^{<2>}, \ldots \ldots \ldots, X^{<T>}$) where $X^{<t>} \in \mathbb{R}^{N \times d}$ by stacking them in a matrix $X_{Self} \in \mathbb{R}^{NT \times d}$ . After that, we use the adjacency matrix of spatio-temporal graph $A_{ST}$ and the concatenated spatio-temporal feature matrix $X_{Self}$ to perform spectral graph convolution~\cite{kipf2016semi} in Equation~\ref{eq:unified_gcn_1} and~\ref{eq:unified_gcn_2} which serve the purpose of spatio-temporal aggregation in a unified manner. Unlike traditional spectral graph convolution, we introduce a temporal weight parameter $W_{Temp} \in \mathbb{R}^{T \times d}$ which is used to learn the importance of traffic features from different timestamps $1$ to $T$. Note that, $W_{Temp}$ is shared across all the nodes in a particular timestamp hence broadcasted to $\mathbb{R}^{NT \times d}$ to perform element-wise multiplication with the stacked feature matrix $X_{Self}$.

\begin{equation}
\tilde{A}_{ST} = D_{ST}^{-\frac{1}{2}} A_{ST} 
D_{ST}^{-\frac{1}{2}}
\label{eq:unified_gcn_1}
\end{equation}
\begin{equation}
X_{ST}^{l} = \tilde{A}_{ST} (X_{Self}^{l} \cdot W_{Temp}^{l})    
\label{eq:unified_gcn_2}
\end{equation}

Here, $l$ denotes the graph convolutional layer and $D_{ST}[i,i] = \sum_{j=1}^{NT}A_{ST}[i,j]$.

After that, we compute a weighted combination of ego (target node) features $X_{Self}^{l}$ and spatio-temporal aggregated features $X_{ST}^{l}$ with $ReLU$ as the nonlinearity in Equation~\ref{eq:self_sp_temp} to obtain the spatio-temporal representaion for the next layer, $X_{Self}^{l+1}$.
	\begin{equation}
	X_{Self}^{l+1} = ReLU(W_{final}^{l} ( X_{Self}^{l} || 	X_{ST}^{l})^T)^T
	\label{eq:self_sp_temp}
	\end{equation}
	Here, $W_{final} \in \mathbb{R}^{2d \times d}$ is a learnable parameter to assign different importance to the self-representation, $X_{Self}^{l}$ of the current timestamp and spatio-temporal aggregated vector, $X_{ST}^{l}$ from previous timestamps.
	
To increase the receptive field of traffic information, we stack $K$ layers of spatio-temporal convolutions together that enables our model to capture information from the $K$-hop neighborhood for each target node at different timestamps. Figure~\ref{fig:unified_sp_temp_aggregation} shows the overview of unified spatio-temporal convolution.


We designed the spatio-temporal adjacency matrix, $A_{ST}$ as a lower triangular matrix so that the USTGCN network only capture the traffic feature from past timestamps. Hence, the resultant feature matrix from Equation~\ref{eq:unified_gcn_1} should also be a lower triangular matrix to ensure that traffic feature from future timestamps is not included during the aggregation. We show it in the following lemma:

\begin{lemma}
$\tilde{A}_{ST}$ is a lower triangular matrix.
\label{lemma1}
\end{lemma}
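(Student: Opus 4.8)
The plan is to reduce the claim to an entrywise computation, exploiting that conjugation by a diagonal matrix preserves the zero pattern of a matrix. First I would recall from the block description of $A_{ST}$ in Equation~\ref{eq:sp_temp_adj} that every super-diagonal block $[A_{ST}]_{(t,t')}$ with $t' > t$ equals $\mathbf{0}$. Since the global row/column index $i$ of a node belongs to the block of timestamp $\lceil i/N \rceil$, this block fact translates into the scalar statement $A_{ST}[i,j] = 0$ whenever $i < j$; that is, $A_{ST}$ is lower triangular. Next I would note that $D_{ST}$ is diagonal directly from its definition $D_{ST}[i,i] = \sum_{j=1}^{NT} A_{ST}[i,j]$ with all off-diagonal entries zero, and that each diagonal degree is strictly positive: every node at timestamp $t \ge 2$ carries a self-loop through the blocks $\tilde{A} = A + I$, and for $t = 1$ we take each road junction to have at least one incident edge. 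Hence $D_{ST}^{-\frac{1}{2}}$ is well defined and is again a diagonal matrix with positive entries $D_{ST}[i,i]^{-\frac{1}{2}}$.

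The core step is then to write a generic entry of the triple product. Because left-multiplication by a diagonal matrix rescales rows and right-multiplication by a diagonal matrix rescales columns, we get $[\tilde{A}_{ST}]_{ij} = D_{ST}^{-\frac{1}{2}}[i,i]\; A_{ST}[i,j]\; D_{ST}^{-\frac{1}{2}}[j,j]$ for all $i,j \in \{1,\dots,NT\}$. For any pair with $i < j$, the middle factor $A_{ST}[i,j]$ vanishes by the first step, so $[\tilde{A}_{ST}]_{ij} = 0$ whenever $i < j$, which is exactly the assertion that $\tilde{A}_{ST}$ is lower triangular. Equivalently, one can argue structurally: the set of lower triangular matrices is closed under multiplication, a diagonal matrix is in particular lower triangular, and $A_{ST}$ is lower triangular, so $D_{ST}^{-\frac{1}{2}} A_{ST} D_{ST}^{-\frac{1}{2}}$ is lower triangular.

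I do not expect any genuine obstacle. The only point that needs a sentence of care is the existence of $D_{ST}^{-\frac{1}{2}}$, i.e. that no diagonal degree is zero; I would dispatch this with the self-loop remark above, or alternatively by adopting the usual convention that a zero row of $A_{ST}$ contributes a zero (rather than undefined) normalization factor, under which the zero pattern is still preserved and the conclusion is unchanged. Everything else is the standard fact that diagonal conjugation cannot create nonzero entries where the original matrix had zeros.
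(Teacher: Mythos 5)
Your proof is correct and takes essentially the same route as the paper's: both arguments reduce the claim to the entrywise observation that multiplying the lower triangular $A_{ST}$ by the diagonal matrix $D_{ST}^{-\frac{1}{2}}$ on either side cannot create nonzero entries above the main diagonal. Your one-line formula $[\tilde{A}_{ST}]_{ij} = D_{ST}^{-\frac{1}{2}}[i,i]\,A_{ST}[i,j]\,D_{ST}^{-\frac{1}{2}}[j,j]$ is just a more compact form of the paper's two-step computation of $(DA)$ and then $(DA)D_{ST}^{-\frac{1}{2}}$, and your remark on the positivity of the degrees (so that $D_{ST}^{-\frac{1}{2}}$ is well defined) addresses a point the paper leaves implicit.
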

\begin{proof}
As we mentioned before, $A_{ST}$ is lower triangular matrix. Also, $ D_{ST}^{-\frac{1}{2}}$ is a diagonal matrix as $D_{ST}$ is a diagonal degree matrix of $A_{ST}$.
\begin{equation}
(DA)[i,j] = \sum_{k=1}^{NT} D_{ST}^{-\frac{1}{2}}[i,k] \times A_{ST}[k,j]
\label{partial_eq_1}
\end{equation}
In Equation~\ref{partial_eq_1}, $(DA)[i,j] = 0$ when $i < j$ (entries above the main diagonal), because either $i \neq k$ or $k < j$ holds,  $D_{ST}^{-\frac{1}{2}}[i,k]$ = 0 or $A_{ST}[k,j]$ = 0. Hence, $(DA)$ is a lower triangular matrix.\\
Again, we rewrite the Equation~\ref{eq:unified_gcn_1} as follows,
\begin{equation}
    \tilde{A}_{ST}[i,j] = \sum_{k=1}^{NT} (DA)[i,k] \times D_{ST}^{-\frac{1}{2}}[k, j]
\label{partial_eq_2}
\end{equation}
Similarly, in Equation~\ref{partial_eq_2}, $\tilde{A}_{ST}[i,j]$ = 0 when $i < j$, because either $i < k$ or $k \neq j$ holds and $(DA)[i,k] = 0$ or $D_{ST}^{-\frac{1}{2}}[k,j] = 0$. Therefore, $\tilde{A}_{ST}$ is a lower triangular matrix.
\end{proof}

Lemma~\ref{lemma1} ensures that USTGCN aggregates spatio-temporal features from previous timestamps only but not from future timestamps.

Untill now, we have described the architecture of USTGCN that convert spatio-temporal graph of physical traffic network and their traffic feature into spatio-temporal representation. 

\subsubsection{Feeding Historical and Current-data into USTGCN} 
We consider the traffic speed of last $P$ days at timestamp $t$ as the historical feature vector of nodes, $X^{<t>}_{H_{u}}$ $\in$ $\mathbb{R}^{P}$, and the traffic speed of current day at timestamp $t$ as current feature of nodes,  $X^{<t>}_{C_{u}}$ $\in$ $\mathbb{R}$ for each node u. As the goal of our model is to preserve the historical traffic information of previous days as well as current day information from last hour traffic data, therefore we concatenate both historical and current feature together i.e., for each node $u$, $X^{<t>}_{u} = X^{<t>}_{H_{u}} || X^{<t>}_{C_{u}}$ and pass into multi-layer USTGCN as input features where $X^{<t>}_{u} \in \mathbb{R}^{P+1}$ as shown in Figure~\ref{fig:model_overview}, 
\begin{equation}
    Z_{E} = USTGCN(A, X^{<1>},....,X^{<T>})
\end{equation}

\begin{table}[b]
\centering
\scalebox{0.90}{
\begin{tabular}{c|c|c|c}
\hline \textbf{Statistics} & \textbf{PeMSD7} & \textbf{PeMSD4}& \textbf{PeMSD8}  \\ \hline
\textit{\#Nodes}& 228 & 307 & 170 \\ \hline
\textit{\#Edges}& 832 & 340 & 295             \\ \hline
\textit{\#Timestamps}  & 12672 & 16992     & 17856           \\ \hline
\textit{Time Span}     & \begin{tabular}[c]{@{}c@{}}2012/5 - 2012/6\\ (only weekdays)\end{tabular} & 2018/1 - 2018/2 & 2016/7 - 2016/8 \\ \hline
\textit{Time Interval} & \multicolumn{3}{c}{5 minutes}                                                              \\ \hline
\textit{Daily Range}   & \multicolumn{3}{c}{00:00 - 24:00}                                                          \\ \hline
\end{tabular}
}
\caption{\small Description of traffic datasets}
\label{tab:dataset_description}
\end{table}

\begin{table*}[!htbp]
\centering
\scalebox{0.9}{
\begin{tabular}{c|c|ccc|ccc|ccc|ccc}
\hline
\multirow{2}{*}{Datasets} & \multirow{2}{*}{Models} & \multicolumn{3}{c|}{\textit{15 min}}          & \multicolumn{3}{c|}{\textit{30 min}}          & \multicolumn{3}{c|}{\textit{45 min}}          & \multicolumn{3}{c}{\textit{60 min}}          \\ \cline{3-14} 
                          &                         & MAE           & RMSE          & MAPE          & MAE           & RMSE          & MAPE          & MAE           & RMSE          & MAPE          & MAE           & RMSE          & MAPE          \\ \hline

\multirow{6}{*}{PeMSD7}   
                          & DCRNN (2018)                   & 2.22          & 4.25          & 5.16          & 3.04          & 6.02          & 7.46          & 3.64          & 7.24          & 9.00          & 4.15          & 8.20          & 10.82         \\  
                          & STGCN (2018)                    & 2.24          & 4.01          & 5.28          & 3.04          & 5.74          & 7.46          & 3.61          & 6.85          & 9.26          & 4.08          & 7.69          & 10.23         \\ 
                          & ASTGCN (2019)                  & 2.85          & 5.15          & 7.25          & 3.35          & 6.12          & 8.67          & 3.70          & 6.77          & 9.73          & 3.96          & 7.20          & 10.53   
                                   \\ 
                          & Graph WaveNet (2019)                  & \underline{2.17}          & \underline{3.87}          & \underline{4.85}          & \underline{2.90}          & \underline{5.40}          & \underline{6.86}          & \underline{3.23}          & \underline{6.29}          & \underline{8.06}          & \underline{3.75}       & \underline{7.02}          & \underline{9.58}
                          \\ 
                          & LSGCN  (2020)                   & 2.22          & 3.98          & 5.14          & 2.96          & 5.47          & 7.18          & 3.43          & 
                          6.39         & 
                          8.51          & 
                          3.81          & 
                          7.09          & 
                         9.60          \\  
                          
                          
                          
                          & \textbf{USTGCN (ours)}                    & \textbf{2.01} & \textbf{3.48} & \textbf{4.67} & \textbf{2.46} & \textbf{4.43} & \textbf{5.96} & \textbf{2.85} & \textbf{5.07} & \textbf{7.00} & \textbf{3.15} & \textbf{5.54} & \textbf{7.89} \\
                          
                          \hline

\multirow{6}{*}{PeMSD4}   
                          & DCRNN  (2018)                  & \underline{1.35} & 2.94 & 2.68 & 1.77 & 4.06 & 3.71 & \underline{2.04} & 4.77 &4.78 &2.26 & 5.28 &5.10 \\ 
                          & STGCN (2018) & 1.47 & 3.01 & 2.92 & 1.93 & 4.21 & 3.98 & 2.26 & 5.01 & 4.73 & 2.55 & 5.65 & 5.39\\  
                          & ASTGCN (2019) & 2.12 & 3.96 & 4.16 & 2.42 & 4.59 & 4.80 & 2.60 & 4.97 & 5.20 & 2.73 & 5.21 & 5.46 \\ 
                          & Graph WaveNet(2019) & \textbf{1.30} & \underline{2.68} &\underline{2.67} & \underline{1.70} & \underline{3.82} & \underline{3.73} &  \underline{1.95} & \underline{4.16} & \underline{4.25} & \underline{2.03} & \underline{4.65} & \underline{4.60}\\
                          & LSGCN (2020)  &1.45 & 2.93 & 2.90 & 1.82 & 3.92 & 3.84 & 2.04 & 4.47 & 4.42 & 2.22 & 4.83 & 4.85 \\  
                          
                           
                           & \textbf{USTGCN (ours) }                   & \underline{1.40} & \textbf{2.69} &
                           \textbf{2.81} &
                           \textbf{1.64} & \textbf{3.19} & \textbf{3.23} & \textbf{1.78} &  \textbf{3.64} &  \textbf{3.82} & \textbf{2.03} & \textbf{4.25} & \textbf{4.32} \\
                           \hline
\multirow{6}{*}{PeMSD8}   
                          & DCRNN  (2018)                  & {1.17}          & 2.59          & 2.32          & 1.49          & 3.56          & 3.21          & 1.71          & 4.13          & 3.83          & 1.87          & 4.50          & 4.28          \\ 
                          & STGCN (2018)                   & 1.19          & 2.62          & 2.34          & 1.59          & 3.61          & 3.24          & 1.92          & 4.21          & 3.91          & 2.25          & 4.68          & 4.54          \\  
                          & ASTGCN (2019)                   & 1.49          & 3.18          & 3.16          & 1.67          & 3.69          & 3.59          & 1.81          & 3.92          & 3.98          & 1.89          & 4.13          & 4.22          \\ 
                          & LSGCN (2020)                  & \underline{1.16}          & \underline{2.45}          & \underline{2.24}          & \underline{1.46}          & 
                          \underline{3.28}          & 
                          \underline{3.02}          & 
                          \underline{1.66}          & \underline{3.75}          & \underline{3.51}          & 
                          \underline{1.81}          & 
                          \underline{4.11}          & 
                          \underline{3.89}          \\ 
                           & \textbf{USTGCN (ours)}                   & \textbf{1.14} & \textbf{2.15} & \textbf{2.07} & \textbf{1.25}          & \textbf{2.58} & \textbf{2.35} & \textbf{1.52} & \textbf{3.01} & \textbf{2.88} & \textbf{1.70} & \textbf{3.27}             & \textbf{3.22} \\
                           \hline
\end{tabular}
} 
\caption{\small Performance comparison of USTGCN with baselines in traffic prediction (\textbf{Bold} = Best, \underline{Underline} = Second Best)}
\label{tab:performance_comparison}
\end{table*}

\noindent

\subsubsection{Regression}
 Once $Z_{E} \in \mathbb{R}^{NT \times d}$ is obtained,  the embeddings of all $T$ timestamps are concatenated and combined into final embedding  $Z_{F}$ as follows,
\begin{equation}
     {Z}_{F} = W_{F} . (Z_{E}^{<1>}\parallel \ldots \parallel Z_{E}^{<T>} )
\end{equation}
where $Z_{E}^{<t>} \in \mathbb{R}^{N \times d}$ represents the spatio-temporal embeddings at timestamp $t$ and $W_{F} \in \mathbb{R}^{Td \times Td} $ is the learnable weight parameter. Finally, ${Z}_{F} \in \mathbb{R}^{N \times Td}$ is passed through a two-layer neural network to predict the traffic speed for each node $u$ and the learnable parameters have been updated by optimizing supervised mean squared error (MSE) as the loss function. Figure~\ref{fig:model_overview} presents the brief overview of whole process.  

\subsubsection{Scalable USTGCN} In real life traffic networks, most of the road junctions are connected with only a few other road junctions compared to the total road junctions present in the network. This results in a \textit{sparse} adjacency matrix of the traffic network. We leverage \textit{sparse} matrix opertions into USTGCN that reduces the space complexity to linear in the number of nodes and edges and enables to execute USTGCN on large graph networks. The total number of nodes in our proposed spatio-temporal graph is $T\times N$ and the total number of edges is $\frac{T(T+1)}{2}(|E| + N)+ T*|E|$ where $|\cdot|$ denotes the cardinality of the edge set, E.

\section{Experimental Analysis}
In this section, we describe the datasets and the experimental setup followed by the elaborate experimental analysis.

\subsubsection{Dataset Description}
To prove the effectiveness of our proposed model, we have performed experiments on three publicly available real-life traffic datasets PeMSD7, PeMSD4, and PeMSD8~\cite{ijcai2020-326} collected by  Caltrans Performance Measurement System (PeMS)~\cite{chen2001freeway}. Those datasets are popular in traffic forecasting research and used for performance comparison in previous works such as STGCN~\cite{yu2017spatio}, ASTGCN\cite{guo2019attention}, and LSGCN\cite{ijcai2020-326}. For partitioning the datasets into training and test set we have followed LSGCN\cite{ijcai2020-326}.
More statistics of these datasets are shown in Table~\ref{tab:dataset_description}

\noindent
\textbf{PeMSD7:} PeMSD7 is traffic data in District 7 of California consisting of the traffic speed of 228 sensors while the period is from May to June in 2012 (only weekdays) with a time interval of 5 minutes. We choose the first month of traffic data as the training set while the rest are used as validation and test set.

\noindent
\textbf{PeMSD4:} The dataset refers to the traffic speed data in San Francisco Bay Area, containing 307 sensors on 29 roads. The time span of the dataset is January-February in 2018 and we choose the first 47 days as the training set and the rest are used as validation and test set. 

\noindent
\textbf{PeMSD8:} This dataset contains the traffic data in San Bernardino from July to August in 2016, with 170 detectors on 8 roads with a time interval of 5 minutes. We select the first fifty days as the training and the rest are used as the validation and test set. 


\subsubsection{Data Preprocessing}
The regular time interval in all three datasets is 5 minutes which means there exist 288 timestamps in each day. The adjacency matrix of the physical traffic network for PeMSD7 is constructed using a thresholded Gaussian kernel following the recent work LSGCN~\cite{ijcai2020-326},
\begin{equation}
  A_{ij} =
    \begin{cases}
      exp(- \frac{d^{2}_{ij}}{\delta}), i \neq j \text{  and} & exp(- \frac{d^{2}_{ij}}{\delta})  \geq \epsilon \\
      0, \text{otherwise} & 
    \end{cases}       
\end{equation}

while for PeMSD4 and PeMSD8 dataset, the adjacency matrix is constructed as

\begin{equation}
  A_{ij} =
    \begin{cases}
      exp(- \frac{d^{2}_{ij}}{\delta}),  \text{ if $i$ and $j$ are neighbors} \\
      0, \text{otherwise} & 
    \end{cases}       
\end{equation}

where $A_{ij}$ determines edge weight between sensor $i$ and sensor $j$ which is related with $d_{ij}$ (the distance between sensor $i$ and $j$). To control the distribution and sparsity of adjacency matrix $A$, we follow LSGCN~\cite{ijcai2020-326} and set the thresholds $\delta = 0.1$ and $\epsilon = 0.5$


\subsubsection{Experimental Setup}
In all the experiments, our model considers $T = 12$ timestamps and predicts the traffic feature for the next 15, 30, 45, and 60 minutes. We use the historical traffic data of the prediction window from last $P=7$ days while remaining timestamps are filled from the the past hour timestamps. In addition, the last hour of the present day traffic data are also used to feed as the spatio-temporal feature $X_{self} \in \mathbb{R}^{NT \times 8}$ to USTGCN model. 
We are using three layers in USTGCN for PeMSD7 and PeMSD8 while four layers are used for the experiments of PeMSD4.
We train our model by minimizing Mean Square Error(MSE) as the loss function with ADAM optimizer for 500 epochs. We initially set the learning rate as 0.001 with a decay rate of 0.5 after every 8 epoch up to 24 epoch and set 0.0001 for the rest of the epochs. 


\subsubsection{Evaluation Metrics and Baselines} To carry out the performance comparison among different models we opt Mean Absolute Errors (MAE), Root Mean Squared Errors (RMSE), and Mean Absolute Percentage Errors (MAPE) as the evaluation metrics. We compare the performance of our model with the following baselines:

\begin{itemize}
    \item DCRNN~\cite{li2017diffusion}: Diffusion Convolution Recurrent Neural Network, this model integrates random walk based graph convolution with recurrent neural network in an encoder-decoder fashion.
    \item STGCN~\cite{yu2017spatio}: Spatio-temporal Graph Convolutional Network, where gated temporal convolution is combined with graph convolution.
    \item ASTGCN~\cite{guo2019attention}: Attention-based Spatial-Temporal Graph Convolution Network, attention guided spatial and temporal convolution of recent, daily, and weekly components.
    \item LSGCN~\cite{ijcai2020-326}: Long Short-term Graph Convolutional Network: Spatial dependency is handled by graph convolution with attention while GLU is proposed to handle temporal dependency.
\end{itemize}

\subsubsection{Comparison with baselines} 
From the performance comparison in Table~\ref{tab:performance_comparison}, it is easy to observe that our model outperforms all baseline models in both long and short-term predictions for all three evaluation metrics on PeMSD7, PeMSD4, and PeMSD8 except the MAE of PeMSD4 for 15 minute  predictions. 
The second-best performance is observed for the recent work
Graph Wavenet in dataset PeMSD7, PeMSD4, and by LSGCN in PeMSD8.
Graph Wavenet learns an adaptive adjacency matrix with different granularity whereas LSGCN analyzes long-term and short-term patterns explicitly by employing attention-guided GCN to capture the spatial dependency and GLU to capture the temporal relationship. 
The results demonstrate those baseline models are still ineffectual to capture complex spatio-temporal dependencies due to their factorized spatial and temporal modules. In contrast, our model is able to capture the complex spatio-temporal relationships through the proposed unified spatio-temporal convolution strategy. In addition to spatio-temporal relations,  our model considers the important historical and current-day pattern by analyzing the data of the last seven days along with current day data. Particularly, including both historical and current day data helps our model in both long-term and short-term prediction with significantly better performance than LSGCN and Graph WaveNet. Furthermore, by introducing $W_{Temp}$ into the unified spatio-temporal aggregation, USTGCN attains the potential to distinguish among different timestamps of different importance which drives the model to achieve significant performance, especially in long-term predictions (30, 45, and 60 minutes) for all the datasets. Because, in long-term predictions, traffic intensities at neighbor nodes of different timestamps in the data window carry different influences on the traffic intensity at a target node in the prediction window.

\begin{table}[!h]
\centering
\scalebox{0.9}{
\begin{tabular}{cccccl}
\hline
\textbf{Model}                                                             & \textbf{\begin{tabular}[c]{@{}c@{}}DCRNN\\ (2018)\end{tabular}} & \textbf{\begin{tabular}[c]{@{}c@{}}STGCN\\ (2018)\end{tabular}} & \textbf{\begin{tabular}[c]{@{}c@{}}Graph WaveNet\\ (2019)\end{tabular}} & \multicolumn{1}{c}{\textbf{\begin{tabular}[c]{@{}c@{}}USTGCN\\ \end{tabular}}} \\ \hline
\begin{tabular}[c]{@{}c@{}}\textbf{Training Time}\\ (s/epoch)\end{tabular} & 1354.75                                                         & 193.45                                                          & 535.51                                                                                       & \textbf{100.91}                                                                       \\ \hline
\end{tabular}
}
\caption{ \small Training time comparison on the PeMSD7 dataset}
\label{tab:triaining_time}
\end{table}

\begin{table}[b]
\centering
\scalebox{0.90}{
\begin{tabular}{c|c|c|c|c|c}
\hline
\textbf{Model}       & \textbf{Metrics} & \textbf{\begin{tabular}[c]{@{}c@{}}15 minutes\end{tabular}} & \textbf{\begin{tabular}[c]{@{}c@{}}30 minutes\end{tabular}} & \textbf{\begin{tabular}[c]{@{}c@{}}45 minutes\end{tabular}} & \textbf{\begin{tabular}[c]{@{}c@{}}60 minutes\end{tabular}} \\ \hline
\multirow{3}{*}{I}   & MAE              & 2.04                                                      & 2.68                                                      & 3.02                                                      & 3.44                                                      \\ \cline{2-6} 
                     & RMSE             & 3.59                                                      & 4.76                                                      & 5.38                                                      & 6.03                                                      \\ \cline{2-6} 
                     & MAPE             & 4.85                                                      & 6.54                                                      & 7.56                                                      & 8.71                                                      \\ \hline
\multirow{3}{*}{A+I} & MAE              & 2.01                                                      & 2.46                                                      & 2.85                                                      & 3.15                                                      \\ \cline{2-6} 
                     & RMSE             & 3.48                                                      & 4.43                                                      & 5.07                                                      & 5.54                                                      \\ \cline{2-6} 
                     & MAPE             & 4.67                                                      & 5.96                                                      & 7.00                                                      & 7.89                                                      \\ \hline
\end{tabular}
}
\caption{ \small Impact of Spatio-Temporal Adjacency Matrix capturing Traffic Data From the Neighbors of Previous Timestamps}
\label{tab:ablation_adj_mat}
\end{table}

\begin{table}[t]
\centering
\scalebox{0.90}{
\begin{tabular}{c|c|c|c|c}
\hline
\begin{tabular}[c]{@{}c@{}}Prediction \\ Hour\end{tabular} & Metrics & $K$=2  & $K$=3  & $K$=4  \\ \hline
\multirow{3}{*}{15 minutes}                                & MAE     & 1.98 & 2.01 & 2.02 \\ \cline{2-5} 
                                                           & RMSE    & 3.47 & 3.48 & 3.46 \\ \cline{2-5} 
                                                           & MAPE    & 4.60 & 4.67 & 4.66 \\ \hline
\multirow{3}{*}{30 minutes}                                & MAE     & 2.47 & 2.46 & 2.48 \\ \cline{2-5} 
                                                           & RMSE    & 4.42 & 4.43 & 4.43 \\ \cline{2-5} 
                                                           & MAPE    & 5.92 & 5.96 & 5.95 \\ \hline
\multirow{3}{*}{45 minutes}                                & MAE     & 2.82 &   2.85   & 2.82 \\ \cline{2-5} 
                                                           & RMSE    & 5.04 &   5.07   & 5.08 \\ \cline{2-5} 
                                                           & MAPE    & 7.01 &   7.00   & 7.13 \\ \hline
\multirow{3}{*}{60 minutes}                                & MAE     & 3.17 & 3.15 & 3.14 \\ \cline{2-5} 
                                                           & RMSE    & 5.62 & 5.54 & 5.52 \\ \cline{2-5} 
                                                           & MAPE    & 7.91 & 7.89 & 7.85 \\ \hline
\end{tabular}
}
\caption{ \small Impact of different number of USTGCN layers for short term and long term traffic prediction on PEMSD7 dataset}
\label{tab:impact_num_layer}
\end{table}

\subsubsection{Training Efficiency} 
Due to the factorized modules (GNN-type spatial-only module followed by RNN-type temporal-only module), the state-of-the-art methods naturally require a large number of parameters. In contrast,
USTGCN performs both the spatial and temporal aggregation in a unified way with requirements of less number of parameters. Besides, its simplicity promotes our model to achieve significant efficiency in terms of the training time of the model. We present the training time comparison of our model USTGCN with other baseline models in Table ~\ref{tab:triaining_time} and observe that USTGCN has faster training time(100.91 second) than all baselines such as STGCN, DCRNN, and GraphWaveNet. Though STGCN is close to our model on the PEMSD7 dataset, USTGCN beats other baselines by a large margin because DCRNN uses complex diffusion convolution with random walks and GraphWaveNet introduces adaptive adjacency matrix learning with different granularities respectively. It demonstrates that either the strategy of using the factorized modules or complex architecture of models could be responsible for large training time whereas spatial and temporal aggregation in a unified GCN manner promotes faster training of USTGCN.

\subsubsection{Ablation Study on Spatio-Temporal Adjacency Matrix} To investigate how USTGCN performs with or without the presence of traffic data from neighbor node from previous timestamps, we have conducted experiments where the traffic data from the neighbor nodes of previous timestamps is captured with $\tilde{A} = A + I$ entries below the main diagonal in the spatio-temporal adjacency matrix $A_{ST}$ (as shown in Equation~\ref{eq:sp_temp_adj}) and  we use $I$ instead of $\tilde{A}$ below the main diagonal in $A_{ST}$ to consider traffic data only from the corresponding node from previous timestamps while ignoring the neighboring nodes. From Table \ref{tab:ablation_adj_mat},  it can be observed that the inclusion of the traffic information from the neighbor nodes at previous timestamps improve the performance of USTGCN on PeMSD7. From our real-life experience we can say that the traffic of a target junction at a particular timestamp not only depends on the traffic of that junction at previous timestamps but also on the traffic of its neighboring junctions from previous timestamps. Our experimental results indicate the same observation in Table~\ref{tab:ablation_adj_mat}. 


\begin{table*}[t]
\centering
\scalebox{0.90}{
\begin{tabular}{c|c|c|c|c|c|c|c|c|c|c|c|c}
\hline
\multirow{2}{*}{\begin{tabular}[c]{@{}c@{}}Observed past\\ Traffic  Data (P)\end{tabular}} & \multicolumn{3}{c|}{15 minutes} & \multicolumn{3}{c|}{30 minutes} & \multicolumn{3}{c|}{45 minutes} & \multicolumn{3}{c}{60 minutes} \\ \cline{2-13} 
                                                                                     & MAE       & RMSE     & MAPE     & MAE       & RMSE     & MAPE     & MAE       & RMSE     & MAPE     & MAE       & RMSE     & MAPE     \\ \hline
 3 days                                                                               & 2.00      & 3.48     & 4.64     & 2.73      & 4.76     & 6.58     & 3.13      & 5.46     & 7.76     & 3.54      & 6.14     & 8.82     \\ \hline
7 days &        2.01   &    3.48      & 4.67 &       2.46 &  4.43         &      5.96    &  2.85        &    5.07      &    7.00      &         3.15 &  5.54         &  7.89     \\ \hline
10 days &       2.20    &  3.65 & 5.09         &       2.70    &    4.63      &    6.47      &      2.95     & 5.37         &       7.76   &   3.10        &     5.49     &  7.90        \\ \hline
\end{tabular}
}
\caption{ \small Impact of Traffic Data observed for various number of past days with current day traffic data in Traffic Forecasting}
\label{tab:day_analysis}
\end{table*}

\subsubsection{Impact of number of USTGCN layers in short term and long term prediction} To determine how different number of USTGCN layers affect the performance in long term and short term traffic prediction, we have presented the performance comparison of USTGCN with different number of layer denoted with $K$ in Table ~\ref{tab:impact_num_layer}. From the performance comparison in Table~\ref{tab:impact_num_layer}  we can observe that in short term traffic prediction e.g. 15 minutes the performance of USTGCN with $K=2$ outperforms $K=3$ and $K=4$. However, for one hour prediction $K=4$ performs better than $K=2$ or $K=3$. With increased number of layers, USTGCN starts to aggregate traffic information from neighbors both at same and different timestamps. For short term prediction, the traffic of nearby junctions determine the traffic at next few timestamp of target junction. For this reason, $K=2$ performs better in 15 minutes traffic prediction. On the other hand, for long term prediction the traffic from distant junctions arrive at the target junction which needs to be considered during traffic prediction. This is the reason $K=4$ shows better performance in one hour traffic prediction than  $K=2$ or $K=3$.

\subsubsection{Analysis on Historical Data Observed for Different Number of Days} To evaluate the impact of historical data in traffic prediction we have conducted experiments with the traffic data observed for the various number of past days and reported the result in Table ~\ref{tab:day_analysis} for the PeMSD7 dataset. The results demonstrate that with the current day data, USTGCN performs best for historical data of past 3 days in 15 minutes, past 7 days in 30 and 45 minutes predictions, and for historical data of past 10 days in 60 min prediction. Besides, we can see that all variants outperform state-of-art baseline models. The performance with traffic data observed for the past 3 days is inferior compared to the traffic data observed for the past 7 days except for 15 minute prediction. For short-term prediction, the traffic information of the last few days is quite good. However, with fewer data from past days, the model cannot properly learn the periodicity of traffic data. Again, introducing more historical data may add noise which may harm the model's prediction in short-term predictions. But more historical data still seems helpful for long-term predictions. Again, to process more data from past days requires more training time. From this experimental analysis, it is to be noted that we have to trade-off the number of past days against the training time and performance to design the effective efficient traffic forecasting model.

\begin{table}[b]
\centering
\scalebox{0.90}{
\begin{tabular}{c|c|c|c}
\hline
\multicolumn{2}{c|}{\backslashbox{\textbf{Metrics}}{\textbf{Models}}} & \textbf{Previous Hour} & \textbf{Including Prediction Window} \\ \hline
\multirow{3}{*}{15 minutes}  & MAE   &  2.01    &   2.01      \\ \cline{2-4} 
                             & RMSE  &  3.55     &  3.48       \\ \cline{2-4} 
                             & MAPE  &  4.74     &  4.67   \\ \hline
\multirow{3}{*}{30 minutes}  & MAE   &  2.62     &  2.46       \\ \cline{2-4} 
                             & RMSE  &  4.71     &  4.43   \\ \cline{2-4} 
                             & MAPE  &  6.45     &  5.96   \\ \hline
\multirow{3}{*}{45 minutes}  & MAE   &  3.02     &  2.85   \\ \cline{2-4} 
                             & RMSE  &  5.35     &  5.07   \\ \cline{2-4} 
                             & MAPE  &  7.54    &  7.00   \\ \hline
\multirow{3}{*}{60 minutes}  & MAE   &  3.40    &  3.15   \\ \cline{2-4} 
                             & RMSE  &  6.40    &  5.54   \\ \cline{2-4} 
                             & MAPE  &  8.64    &  7.89   \\ \hline
\end{tabular}
}
\caption{ \small Experimental Analysis on Historical data window: Traffic Data from Previous Hour vs Last 12 timestamps including Prediction Window}
\label{tab:time_window_analysis}
\end{table}

\subsubsection{Analysis on Historical Data from Different Time Window} Throughout all the experiments we consider the past 12 timestamps from the prediction window as current-day data window and for historical data, the last $12$ timestamps including the prediction window of last $P$ days (as in Figure~\ref{fig:time_window} we described in the introduction section).  To evaluate the impact of the different possible data time window, we experiment on another possible traffic data window for historical data which consists of 12 timestamps from the past hour of the prediction window.
From Table~\ref{tab:time_window_analysis}, we can observe that with the traffic data of the prediction window in the historical data window, the performance of USTGCN improves in all three metrics. Intuitively, it is to say that to capture the traffic patterns from past days' data, the data window including the prediction window provides more informative patterns than the data window from the past hour of the prediction window. 

\section{Conclusion}
Our proposed Unified Spatio-Temporal Graph Convolution Network (USTGCN) effectively captures complex spatio-temporal dependencies between nodes across different timestamps without the requirement of a factorized spatial-only and temporal-only module. Both historical and current features from past days and current-day data enable our model to encode historical daily pattern as well as current-day traffic pattern into the hidden representation of nodes. Besides, due to its simplicity, the model gains significant efficiency in terms of training time. Moreover, the extensive experiential analysis on several real-life datasets verifies the effectiveness and efficiency of our model.

\section{Acknowledgements}
This project is supported by a grant from the Independent University Bangladesh and ICT Division of Bangladesh Government.
\bibliographystyle{IEEEtran}
\bibliography{ReferencesTraffic}   

\end{document}